\newcommand{\customfootnotetext}[2]{%
  \begingroup
  \renewcommand{\thefootnote}{#1}%
  \footnotetext{#2}%
  \endgroup
}
\newcommand{\fullalgoname}{Lay-Your-Scene}
\newcommand{\algoname}{LayouSyn}
\definecolor{darkgreen}{RGB}{0,100,0}  
\definecolor{cvprblue}{rgb}{0.21,0.49,0.74}
\title{Lay-Your-Scene: Natural Scene Layout Generation with Diffusion Transformers}
\author{Divyansh Srivastava\textsuperscript{1} \quad
Xiang Zhang\textsuperscript{1}
\quad
He Wen\textsuperscript{2}\textsuperscript{*}
\quad
Chenru Wen\textsuperscript{2}\textsuperscript{*}
\quad
Zhuowen Tu\textsuperscript{1} \\
\textsuperscript{1}UC San Diego \quad 
\textsuperscript{2}Tsingua University \\
}
\begin{document}
\definecolor{Gray}{gray}{0.9}
\newcolumntype{a}{>{\columncolor{Gray}}c}

\twocolumn[{%
			\renewcommand\twocolumn[1][]{#1}%
			\maketitle
			\begin{center}
				\vspace{-5mm}
				\includegraphics[width=\linewidth,trim=1mm 2mm 1mm 0mm,clip]{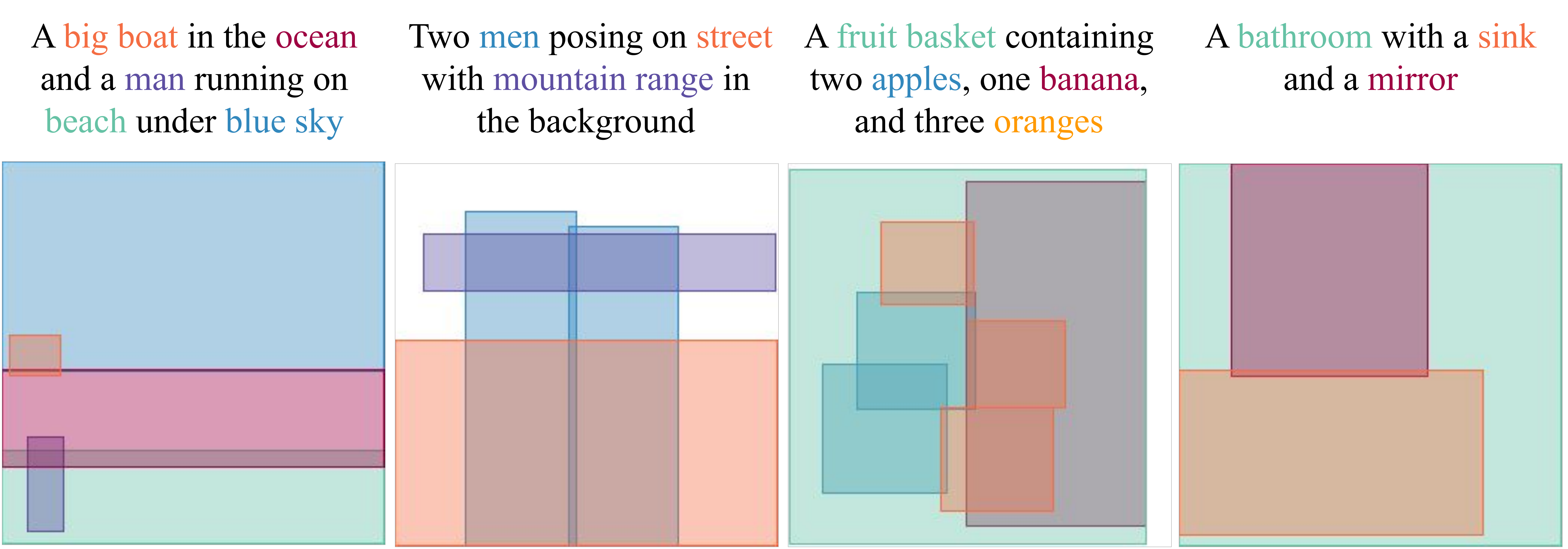}
				\captionof{figure}{{Text to natural scene layout generation} with \textbf{\algoname{}}. \algoname{} demonstrates superior scene awareness, generating layouts with high geometric plausibility and strictly adhering to numerical and spatial constraints. Object nouns in the prompts are highlighted with corresponding colors in the layout.}
                \vspace{0.5em}
				\label{fig:teaser}
			\end{center}
		}]

\customfootnotetext{*}{Work done during internship at UC San Diego}

\begin{abstract}    
    We present \textbf{\fullalgoname{}} (shorthand \algoname{}), a novel text-to-layout generation pipeline for natural scenes. Prior scene layout generation methods are either closed-vocabulary or use proprietary large language models for open-vocabulary generation, limiting their modeling capabilities and broader applicability in controllable image generation. In this work, we propose to use lightweight open-source language models to obtain scene elements from text prompts and a novel aspect-aware diffusion Transformer architecture trained in an open-vocabulary manner for conditional layout generation. Extensive experiments demonstrate that \algoname{} outperforms existing methods and achieves state-of-the-art performance on challenging spatial and numerical reasoning benchmarks. Additionally, we present two applications of \algoname{}. First, we show that coarse initialization from large language models can be seamlessly combined with our method to achieve better results. Second, we present a pipeline for adding objects to images, demonstrating the potential of \algoname{} in image editing applications.
\end{abstract}

\section{Introduction}
 
With the advancement of text-to-image generative models~\cite{ramesh2021zero,nichol2021glide,rombach2022high,chen2023pixart,xue2024raphael}, there has been a growing interest in controllable image generation~\cite{li2023gligenopensetgroundedtexttoimage,zhang2023adding}, where users can explicitly control the spatial locations~\cite{xie2023boxdiff,wang2024instancediffusion,li2023gligenopensetgroundedtexttoimage}, and counts~\cite{binyamin2024make,yang2023reco} of objects in the generated images. While existing frameworks ~\cite{xie2023boxdiff,wang2024instancediffusion,li2023gligenopensetgroundedtexttoimage} can achieve satisfactory control over image generation, they require users to provide fine-grained conditioning inputs, such as plausible scene layouts. Therefore, a text-to-layout generation framework is necessary to reduce the manual, time-consuming, and costly effort of obtaining such layouts.

Existing works have addressed layout generation with various architectures for modeling inter-element relationships within scene elements: variational autoencoders \cite{jyothi2019layoutvae}, GANs \cite{li2019layoutgan}, autoregressive transformers \cite{gupta2021layouttransformer}, and more recently, continuous \cite{wang2024dolfin} or discrete \cite{inoue2023layoutdm, zhang2023layoutdiffusion} diffusion processes. While these methods demonstrate competitive performance across various benchmarks, they primarily focus on unconditional layout generation, such as document layouts. Furthermore, these models often assume a fixed set of object categories and struggle with complex text conditions, restricting their application in controlled, end-to-end, open-vocabulary text-to-image generation pipelines.

Recently, a line of work \cite{feng2023layoutgpt, gani2023llm, feng2023ranni, lian2023llm} has leveraged generalization and in-context learning capabilities of proprietary Large Language Models (LLMs), such as GPT \cite{ouyang2022training}, for open-vocabulary layout generation. While these LLM-based approaches can generate reasonable scene layouts, they often produce unrealistic object aspect ratios or unnatural bounding box placements \cite{gani2023llm}. Additionally, reliance on proprietary closed-source LLMs introduces opacity, increased latency, and higher costs, limiting their broader applicability in areas including controllable image generation.

To address these limitations, we introduce \textbf{\fullalgoname{}} (\algoname{}), a novel text-to-natural scene layout generation pipeline that combines the text understanding capabilities of lightweight open-source language models with the strong generative capabilities of diffusion models in the visual domain. We frame the scene layout generation task as a two-stage process. First, a lightweight, open-source language model extracts a set of relevant object descriptions from the text prompt describing the scene. Second, we train a novel, aspect-ratio-aware diffusion model in an open-vocabulary manner for scene layout generation. Our contributions are summarized as follows:

\begin{enumerate}
    \item \textbf{Novel framework and architecture:} We present {\algoname{}}, a novel open-vocabulary text to natural scene layout generation pipeline. Our pipeline adopts a lightweight open-source language model to predict object descriptions from text prompts and a novel aspect-aware Diffusion-Transformer architecture with improved alignment between local and global conditions for layout generation. A schematic illustration of the training and inference pipeline is demonstrated in \cref{fig:pipeline}.
    \item \textbf{State-of-the-art results:} Extensive qualitative and quantitative experiments demonstrate that \algoname{} can generate semantically and geometrically plausible layouts, outperform existing methods on scene layout generation, and achieve state-of-the-art performance on challenging spatial and numerical reasoning tasks. 
    \item \textbf{Versatile applications:} We demonstrate the versatility of \algoname{} in two key applications: First, we show that coarse layout initialization from LLMs, such as GPT, can be seamlessly integrated with our method, achieving better results. Second, we utilize \algoname{} for layout completion and introduce an automated pipeline for adding objects to images, highlighting \algoname{}'s potential in image editing applications.
\end{enumerate}

\begin{figure*}[t]
	\centering
	\includegraphics[width=\linewidth]{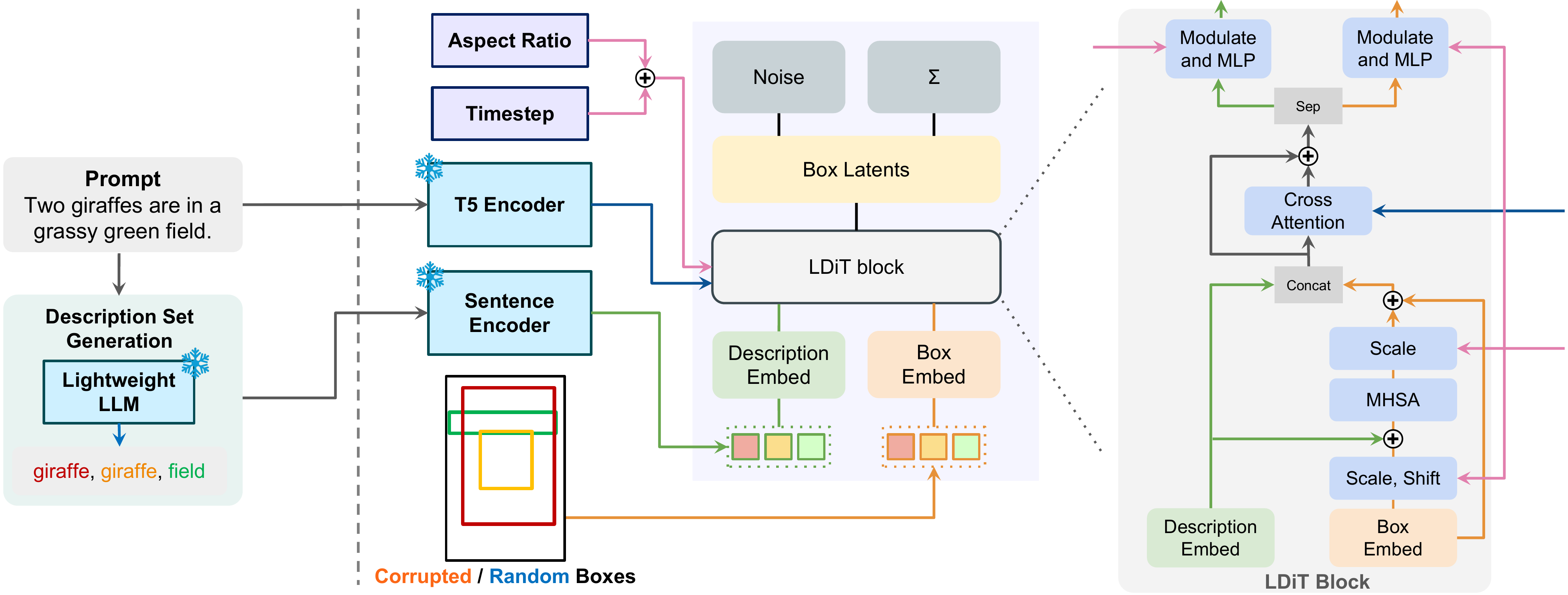}
	\caption{\textbf{Overview of inference pipeline for \algoname{}}. We frame the scene layout generation task as a two-stage process. First, a lightweight language model extracts a set of relevant object descriptions from the text prompt describing the scene. Second, a trained diffusion model generates layouts conditioned on the text prompt and object descriptions. Note that \textit{Concat} in the LDiT block refers to the concatenation of description and bounding box tokens along the token dimension, while \textit{Sep} denotes their separation.}
	\label{fig:pipeline}
\end{figure*}

\section{Related Work}

\paragraph{Document layout generation} Previous works on closed-vocabulary layout generation (with fixed object categories) have proposed sophisticated architectures to address the task. For instance, LayoutGAN~\cite{li2019layoutgan}, a GAN-based framework, generates both object labels and bounding boxes from noise simultaneously. However, it cannot perform layout generation conditioned on specific description sets, and its evaluations are limited to documents with few objects in the layout. LayoutVAE~\cite{jyothi2019layoutvae} improves upon this by generating layouts conditioned on description sets autoregressively using LSTM-based VAEs, handling a larger number of object categories, such as those found in the COCO dataset~\cite{lin2015microsoftcococommonobjects}. VTN~\cite{arroyo2021variational} brings further improvements by incorporating Transformers as a building block for VAEs, allowing better modeling of inter-element relationships within a layout. Another line of research formulates layout generation as a sequence generation problem: BLT~\cite{kong2022blt} employs a bidirectional Transformer for iterative decoding, while LayoutTransformers~\cite{gupta2021layouttransformer} uses the standard next-token prediction approach for generating layout elements. LayoutFormer++~\cite{jiang2023layoutformer++} introduces decoding space restrictions to effectively align layouts with user-defined constraints. More recently, diffusion models have been explored for layout generation: Dolfin~\cite{wang2024dolfin} and LayoutDM~\cite{chai2023layoutdm} apply continuous diffusion in the bounding box coordinate space, while LayoutDM~\cite{inoue2023layoutdm} and LayoutDiffusion~\cite{zhang2023layoutdiffusion} address the task using discrete diffusion on both coordinate and type tokens. These models have also shown utility in conditional generation tasks, such as layout refinement and type-conditioned generation. Despite these advances, most of these approaches are primarily designed and benchmarked on document layout generation tasks, and the closed-vocabulary nature limits their applicability and generalizability to natural scene layout generation.

\paragraph{Scene layout generation} Open-vocabulary layout generation is an essential component of controllable text-to-image generation pipelines. Methods such as GLIGEN \cite{li2023gligenopensetgroundedtexttoimage}, Instance Diffusion \cite{wang2024instancediffusioninstancelevelcontrolimage}, ReCo \cite{yang2023reco}, and Boxdiff \cite{xie2023boxdiff} leverage input scene layouts and fine-grained object-level natural language prompts for layout-to-image generation. Unfortunately, these approaches require human intervention to provide scene layouts, which is time-consuming and costly. This necessitates open-vocabulary layout generation frameworks that accommodate any object-level natural language prompts without being restricted to fixed categories. Recent approaches predominantly address this challenge by leveraging the reasoning capabilities of large language models (LLMs) like ChatGPT~\cite{ouyang2022training}. For instance, LayoutGPT \cite{feng2023layoutgpt} employs a style sheet-like scene structure combined with the in-context learning capabilities of LLMs to generate layouts using proprietary GPT models. Additionally, LayoutGPT \cite{feng2023layoutgpt} introduces the Numerical and Spatial Reasoning (NSR-1K) benchmark to evaluate the spatial and counting accuracy of generated layouts, which we also use to assess the performance of our \algoname{}. LLM Blueprint~\cite{gani2023llm} generates object descriptions alongside layouts to better guide the image generation pipeline. While these approaches achieve promising results, their reliance on proprietary LLMs for layout generation reduces transparency and incurs additional financial costs. Also, they tend to generate unrealistic layouts as demonstrated in \cite{gani2023llm}. In contrast, \algoname{} adopts lightweight open-source language models to predict objects from text prompts and a novel open-vocabulary diffusion-Transformer-based architecture trained in a scene-aware manner to generate layouts at any aspect ratio.

\paragraph{Diffusion Transformers} Diffusion Transformers were first introduced in \cite{peebles2023scalable} to tackle class-conditional image generation. The self-attention layers in Transformers allow for more effective modeling of relationships between tokens. Beyond text-to-image generation~\cite{esser2024scaling}, this architecture has been adapted for layout generation~\cite{inoue2023layoutdm,wang2024dolfin}, 3D shape generation~\cite{mo2023dit,xu2024bayesian}, and video generation~\cite{videoworldsimulators2024}. Our approach builds on diffusion Transformers for layout generation, with a diffusion process that operates directly in the continuous bounding box coordinate space, eliminating the need for VAE encoding.

\section{Method}

We present \textbf{\algoname{}}, a novel diffusion-based pipeline for open-vocabulary text-to-layout (T2L) generation. Our approach decomposes T2L generation into two sub-tasks: (1) generating object descriptions from an input text prompt and (2) generating layouts conditioned on object descriptions and the input prompt.  Formally, given a prompt $p$ describing a natural scene $\mathcal{S}$, we aim to generate a layout $\mathcal{L}$, represented by a set of tuples $\mathcal{L} = \{(d_i, b_i)\}_{i=1}^N$. Here, $d_i$ denotes the natural language description of the $i$-th object, and $b_i = (x_i^0, y_i^0, x_i^1, y_i^1)$ specifies the bounding box coordinates, where $(x_i^0, y_i^0)$ and $(x_i^1, y_i^1)$ correspond to the top-left and bottom-right corners, respectively. We extract the object descriptions $\mathcal{D}=\{d_i\}_{i=1}^N$ from the prompt $p$ with a small language model (\cref{sec:label_set_generation}), and generate the layout conditioned on the object descriptions $\mathcal{D}$ and the prompt $p$ using a diffusion model (\cref{sec:architecture}). We provide an overview of our pipeline in \cref{fig:pipeline}.

\subsection{Description Set Generation} \label{sec:label_set_generation}

The description set $\mathcal{D}$ consists of natural language descriptions of objects of interest and their counts in the scene described by the prompt $p$.  To generate this set from $p$, we prompt a large language model (LLM) to extract noun phrases, assign a count to each noun phrase, and filter out the noun phrases that cannot be visualized in the scene. This allows us to generate the Description set $\mathcal{D}$ in an open-vocabulary manner, and our method can work in settings where object descriptions are not present or provided by the users. As demonstrated in \cref{tab:nsr_evaluation} and \cref{tab:effect_llm}, lightweight open-source LLMs such as Llama-3.1-8B \cite{dubey2024llama3herdmodels} are capable of generating description sets comparable in quality to those produced by closed-source and proprietary LLMs. We hypothesize that sentence parsing represents a fundamental task in language modeling, achievable by smaller-scale LLMs. The details for prompting are in the supplementary material, with representative examples shown \cref{tab:llm_label_set_predictions}.

\begin{table*}[!ht]
	\centering
	\resizebox{.8\linewidth}{!}{
	\begin{tabular}{ll}
		\toprule
		\textbf{Prompt}                                                   & \textbf{Description set}                 \\ \midrule
		There is a teapot and food on a plate.                            & teapot: 1, food: 1, plate: 1             \\
		Two men carrying plastic containers walking barefoot in the sand. & man: 2, plastic container: 2, sand: 1    \\
		A couple of children sitting down next to a laptop computer.      & child: 2, laptop: 1                      \\
		A man riding on the back of an elephant along a dirt road.        & man: 1, elephant: 1, dirt road: 1        \\
		Girl on a couch with her computer on a table                      & girl: 1, couch: 1, computer: 1, table: 1 \\
		\bottomrule
	\end{tabular}
	}
    \caption{Examples of description sets generated with LLama3.1-8B.}
	\label{tab:llm_label_set_predictions}
\end{table*}

\subsection{Conditional Layout Generation} \label{sec:architecture}

\subsubsection{Preliminaries}

We formulate the layout generation as a conditional generation problem that can be readily solved by conditional diffusion models (DDPM) \cite{ho2020denoisingdiffusionprobabilisticmodels}. Diffusion models are trained to generate high-quality samples from the target distribution by applying iterative denoising from pure Gaussian noise. We operate directly in the space of bounding box coordinates. Let $\mathcal{B}=\{b_i\}_{i=1}^N$ be the set of bounding boxes, where $b_i \in \mathbb{R}^4$ represents the bounding box coordinates of the $i^{th}$ object. In the forward diffusion process, given a starting sample from conditional layout distribution $\mathcal{B}_0 \sim p (\mathcal{B} | C)$, we generate a sequence of samples $\{\mathcal{B}_t\}_{t=1}^T$ by iteratively adding noise for $T$ timesteps:
\begin{equation}
	\mathcal{B}_t = \sqrt{\bar{\alpha}_t} \cdot \mathcal{B}_{0} + \sqrt{1 - \bar{\alpha}_t} \cdot \mathcal{E}_t
\end{equation}
where $\bar{\alpha}_t$ is the noise schedule, which decreases from 1 to 0 as $t$ goes from $0$ to $T$ in the diffusion process. $\mathcal{E}_t$ is a set with $N$ elements sampled from Gaussian noise $\mathcal{N}(0, I)$, and is element-wise added to $\mathcal{B}_0$. The condition $C$ is a combination of object description set $\mathcal{D}$, text prompt $p$, and aspect ratio $r=W / H$, where $W$ and $H$ are the width and height of the layout respectively.

A denoiser $\mathcal{E}_{\theta}$, parameterized by $\theta$, is trained to predict the noise added to the $\mathcal{B}_{0}$ at a given timestep $t$. The training objective is to minimize the MSE loss between added noise $\mathcal{E}_t$ and the predicted noise $\mathcal{E}_{\theta}(\mathcal{B}_{t},\,C,\,t)$:
\begin{equation}
	\mathcal{L}(\theta) = \mathbb{E}_{\mathcal{B}_0 \sim p(\mathcal{B} | C),\, t \sim \mathcal{U}(1, T)} \left[ \left\| \mathcal{E}_{\theta}(\mathcal{B}_{t},\,C,\,t) - \mathcal{E}_t \right\|^2 \right]
\end{equation}

We refer to the prompt $p$ as the global condition, the object description set $\mathcal{D}$ as the local condition, and the aspect ratio $r$ and timestep $t$ as scalar conditions.

\subsubsection{Layout Diffusion Transformer}
We model the denoiser $\mathcal{E}_{\theta}$ with a diffusion Transformer architecture~\cite{peebles2023scalable}, which applies a series of Transformer blocks on tokenized inputs to predict the added noise. We embed the bounding box coordinates $b_i$ into a fixed-size token of dimension $d_{model}$ with an MLP layer and encode the object description $d_i$ to a fixed-size token $d_{model}$ with T5-sentence encoder~\cite{ni2021sentencet5scalablesentenceencoders} followed by an MLP layer. We concatenate the encoded bounding box coordinates and object description embeddings to form the input token $t_i$ for the network, which is then passed through a series of modified DiT blocks.

We modify the standard DiT block to propagate global semantic information from the text prompt $p$ to local object description $d_i$, and bounding box coordinates $b_i$. We hypothesize that cross-attention between text-prompt and object description tokens can improve alignment between global and local conditions. We encode the text prompt $p$ using the Google-T5 model \cite{2020t5} and add a cross-attention layer to attend to object description and bounding box queries. Similar to bounding box tokens, object descriptions are passed through an MLP and modulated following the standard DiT block setting.

We normalize the bounding box coordinates by respective layout dimensions and rescale the coordinates to range $[-1, 1]$ prior to input to the denoiser for an aspect-ratio agnostic representation. We encode the global scalar conditions, aspect ratio $r$ and timestep $t$, with adaptive layer norm (adaLN) \cite{perez2017filmvisualreasoninggeneral}. Our final architecture is present in \cref{fig:pipeline}.

\subsubsection{Noise Schedule Scaling}
The signal-to-noise ratio significantly affects the performance of the diffusion model \cite{chen2023importancenoiseschedulingdiffusion}, and the low dimensionality of bounding box coordinates results in information being destroyed quickly in the initial phases of the denoising process, as demonstrated in the supplementary material. Previous works \cite{chen2023generalistframeworkpanopticsegmentation, chen2023diffusiondetdiffusionmodelobject} proposed to scale the input to the denoiser by a scaling factor $s$. However, this approach requires normalization of inputs for stable training \cite{chen2023importancenoiseschedulingdiffusion}. Instead, we integrate the scaling factor directly into the noise schedule $\alpha_t$:
\begin{equation} \label{eq:alpha}
	\bar{\alpha}'_t = \frac{\bar{\alpha}_t \cdot s^2}{1 + (\bar{\alpha}_t \cdot (s^2 - 1))}
\end{equation}
We visualize the effect of the scaling factor on the denoising process and provide complete proof of \cref{{eq:alpha}} in the supplementary material. Overall, $s > 1$ results in a more gradual destruction of information for the bounding box coordinates and improves the performance of the diffusion model, as demonstrated in our ablation study.

\section{Experiments}

We conduct a comprehensive set of experiments to demonstrate that \algoname{} achieves state-of-the-art performance on open-vocabulary scene layout generation and conduct ablation studies in \cref{sec:experiment_ablation_study} to demonstrate the effectiveness of our design choices pertaining to architecture, scaling factor, and description set generation.

\subsection{Datasets}

\paragraph{NSR-1K Spatial \cite{feng2023layoutgpt}} We use the NSR-1K spatial dataset proposed in LayoutGPT \cite{feng2023layoutgpt} to train our model for understanding the spatial relationship between objects present in the scene. The dataset contains 738 prompts describing four spatial relations: \textit{above}, \textit{below}, \textit{left}, and \textit{right} between two objects in the scene.

\paragraph{COCOGroundedDataset (COCO-GR)} COCO17 \cite{lin2015microsoftcococommonobjects} is a widely known dataset containing image-caption pairs along with object bounding boxes. However, there are two limitations for training \algoname{} on the COCO17 dataset: (1) The labels of bounding boxes are limited to 80 object classes, limiting the ability to train an open-vocabulary model, and (2) There is a low semantic overlap between the bounding boxes of objects in the image and the associated captions. To address these issues, we create a \textit{Grounded MS-COCO dataset} following \cite{peng2023kosmos2groundingmultimodallarge}, which we refer to as \textbf{COCO-GR}. We extract nouns present in the image captions with Llama-3.1-8B and obtain the bounding boxes for the extracted nouns using GroundingDINO. Our final dataset contains 578,951 layouts with an average of 5.62 objects per layout and an average prompt length of 9.91 words. We primarily use COCO-GR to compare the methods in a comprehensive manner with the FID \cite{heusel2018ganstrainedtimescaleupdate} score.

\paragraph{GRIT \cite{peng2023kosmos2groundingmultimodallarge}} GRIT is a large-scale dataset of Grounded Image-Text pairs created based on image-text pairs from COYO-700M and LAION-2B. We use this dataset for pretraining our model to efficiently learn the alignment between the global prompt and local object descriptions. During training, we utilize layouts that contain at least 3 objects, resulting in approximately 3.5 million layouts.

\subsection{Implementation Details} We use the Llama-3.1-8B model to generate the description set $\mathcal{D}$ from the text prompt $p$ and initialize a denoiser consisting of 8 layers, each with 8 attention heads for multi-head attention, and a hidden dimension of size 256, resulting in a model with $\sim$18M parameters. We use 100 diffusion steps at a noise schedule scale $s = 2.0$ for training and sample with 15 DDIM steps using timestep respacing \cite{nichol2021improved} during inference, unless otherwise stated. We use ADAM \cite{kingma2017adammethodstochasticoptimization} with a learning rate of $10^{-4}$ and a batch size of 256 on 2 NVIDIA RTX A5000 GPUs. We train two models: 1) LayouSyn, which is trained directly on the NSR-spatial and COCO-GR datasets for 800K steps, and 2) LayouSyn (GRIT pretraining), which is first pretrained on GRIT for 800K steps and then finetuned on the NSR-spatial and COCO-GR datasets with a learning rate of $1e^{-5}$ for 200K steps.

\begin{table}[!ht]
	\centering
	\begin{tabular}{lc}
		\toprule
		Model                          & L-FID $\downarrow$                             \\ \midrule
		LayoutGPT (GPT-3.5)            & 3.51                                           \\
		LayoutGPT (GPT-4o-mini)        & 6.72                                           \\
		Llama-3.1-8B (finetuned)       & 13.95                                          \\
		\midrule
		{\algoname}                    & \textbf{3.07} (\textcolor{darkgreen}{+12.5\%}) \\
		{\algoname} (GRIT pretraining) & 3.31 (\textcolor{darkgreen}{+5.6\%})           \\ \bottomrule
	\end{tabular}
	\caption{\textbf{Layout Quality Evaluation on the COCO-GR Dataset}: Our method outperforms existing layout generation methods on the FID (L-FID) score by at least 5.6\%.} \label{tab:combined_evaluation_coco}
\end{table}

\subsection{Layout FID Evaluation}\label{sec:experiment_open_vocabulary_evaluation_coco}

\begin{table*}[!ht]
	\centering

	\begin{tabular}{l aaac ac}
		\toprule
		                                                           & \multicolumn{4}{c}{\textbf{Numerical Reasoning}} & \multicolumn{2}{c}{\textbf{Spatial Reasoning}}                                                                                         \\
		\cmidrule(lr){2-5}\cmidrule(lr){6-7}
		\cmidrule(lr){2-3} \cmidrule(lr){4-5} \cmidrule(lr){6-6} \cmidrule(lr){7-7}
		                                                           & \makecell{Prec. $\uparrow$}                      & \makecell{Recall $\uparrow$}                   & Acc. $\uparrow$ & GLIP $\uparrow$   & \makecell{Acc. $\uparrow$ } & GLIP $\uparrow$   \\
		\midrule
		GT layouts                                                 & 100.0                                            & 100.0                                          & 100.0           & 50.08             & 100.00                      & 57.20             \\ \midrule

		\textbf{\underline{In-context Learning}}                   &                                                  &                                                &                 &                   &                             &                   \\
		LayoutGPT~(Llama-3.1-8B)                                   & 78.61                                            & 84.01                                          & 71.71           & 49.48             & 75.40                       & 47.92             \\
		LayoutGPT~(GPT-3.5)                                        & 76.29                                            & 86.64                                          & 76.72           & \underline{54.25} & 87.07                       & 56.89             \\
		LayoutGPT~(GPT-4o-mini) \cite{feng2023layoutgpt}           & 73.82                                            & 86.84                                          & 77.51           & \underline{57.96} & 92.01                       & \underline{60.49} \\ \midrule
		\textbf{\underline{Zero-shot}}                             &                                                  &                                                &                 &                   &                             &                   \\
		LLMGroundedDiffusion (GPT-4o-mini) \cite{lian2023llm}      & 84.36                                            & 95.94                                          & 89.94           & 38.56             & 72.46                       & 27.09             \\
		LLM Blueprint (GPT-4o-mini)  \cite{gani2023llm}            & \textbf{87.21}                                   & 67.29                                          & 38.36           & 42.24             & 73.52                       & 50.21             \\
		\midrule
		\textbf{\underline{Trained /Finetuned}}                    &                                                  &                                                &                 &                   &                             &                   \\
		LayoutTransformer \cite{gupta2021layouttransformer} *      & 75.70                                            & 61.69                                          & 22.26           & 40.55             & 6.36                        & 28.13             \\
		Ranni \cite{feng2023ranni}                                 & 56.23                                            & 83.28                                          & 40.80           & 38.19             & 53.29                       & 24.38             \\
		Llama-3.1-8B (finetuned)  \cite{dubey2024llama3herdmodels} & 79.33                                            & 93.36                                          & 70.84           & 44.72             & 86.64                       & 52.93             \\
		\midrule
		\textbf{\underline{Ours}}                                  &                                                  &                                                &                 &                   &                             &                   \\
		{\algoname{}}                                              & 77.62                                            & 99.23                                          & 95.14           & \underline{56.17} & 87.49                       & 54.91             \\
		{\algoname{} (GRIT pretraining)}                           & 77.62                                            & \textbf{99.23}                                 & \textbf{95.14}  & \underline{56.20} & \textbf{92.58}              & \underline{58.94} \\
		\bottomrule
	\end{tabular}

	\caption{\textbf{Spatial and Counting Evaluation on the NSR-1K Benchmark}. \algoname{} outperforms existing methods on spatial and counting reasoning tasks, achieving state-of-the-art performance on most metrics. Note: $*$ indicates metrics reported by LayoutGPT~\cite{feng2023layoutgpt}, and \textcolor{gray}{\textbf{shaded columns}} represent metrics computed directly on the generated layouts. We \textbf{bold} values for metrics where the ground truth (GT) is 100\% and \underline{underline} values where methods exceed the ground truth performance.}
	\label{tab:nsr_evaluation}
\end{table*}

\begin{figure}[!ht]
	\centering
	\includegraphics[width=\linewidth]{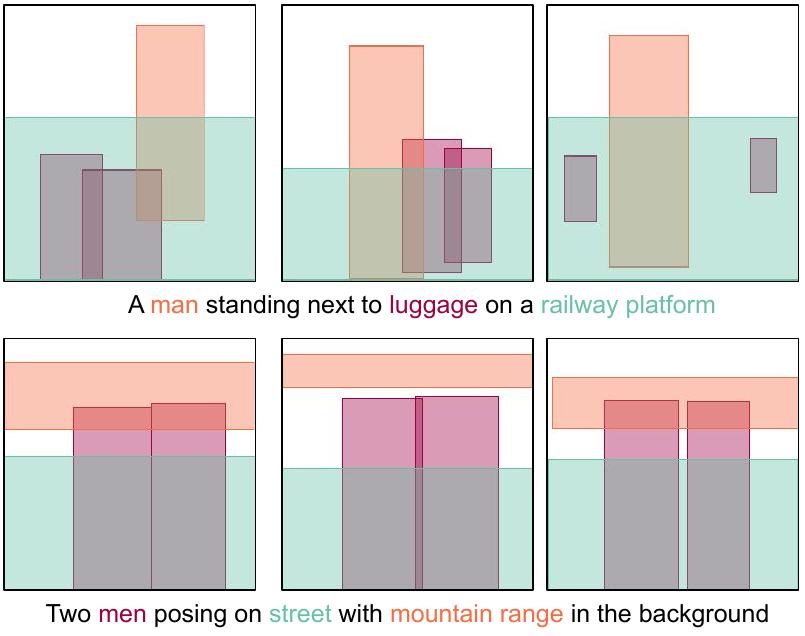}
	\caption{Diversity of layouts generated by \algoname{}}
	\label{fig:open_vocabulary_examples_coco}
\end{figure}

We evaluate the quality of layouts generated by \algoname{} with the FID~\cite{heusel2018ganstrainedtimescaleupdate} metric on the COCO-GR validation dataset. For a fair evaluation, we limit our comparison to methods that can directly or indirectly utilize COCO-GR training datasets and use two baselines: 1) LayoutGPT with in-context prompting and 2) Llama-3.1-8B finetuned on COCO-GR and NSR-spatial dataset. We add the COCO-GR training dataset to the in-context exemplars used by LayoutGPT. For Llama finetuning, we use LoRA~\cite{hu2022lora} with $r=8$ and fine-tune for 10K steps.

For FID calculation, we draw the layout as an image and map each object to a specific color following document layout generation literature \cite{wang2024dolfin}, taking into account semantic similarity between different objects based on CLIP \cite{radford2021learningtransferablevisualmodels} similarity. Due to cost constraints with GPT models, we limit our evaluation to the first 8,700 captions from the COCO-GR validation dataset and use LayoutGPT with GPT-3.5 and GPT-4o-mini. The results are shown in \cref{tab:combined_evaluation_coco}. We outperform LayoutGPT on Layout FID metric by a significant 12.50\% and 5.6\%, without and with GRIT pretraining respectively, demonstrating the superiority of our proposed method in open-vocabulary layout generation. Moreover, we qualitatively demonstrate the diversity of generated layouts in \cref{fig:open_vocabulary_examples_coco}, and show comparisons with LayoutGPT in \cref{fig:qual_compare_layoutgpt}.

\subsection{Spatial and Numerical Evaluation}
We evaluate \algoname{} on the NSR-1K spatial and numerical reasoning benchmark and compare extensively with state-of-the-art methods, including LayoutGPT \cite{feng2023layoutgpt}, LLMGroundedDiffusion \cite{lian2023llm}, LLM Blueprint \cite{gani2023llm}, Ranni \cite{feng2023ranni}, and Llama-3.1-8B finetuned on the COCO-GR and NSR-Spatial dataset. We use GLIGEN~\cite{li2023gligenopensetgroundedtexttoimage} to generate images from layouts and re-run GLIGEN on the layouts reported in LayoutGPT due to a lack of original hyperparameters. We briefly describe the metrics below for completeness and refer the readers to LayoutGPT \cite{feng2023layoutgpt} for more details.

\paragraph{Numerical Reasoning} We evaluate the numerical quality of the generated layouts on Precision, Recall, Accuracy, and GLIP accuracy. \textit{Precision} is the percentage of predicted objects in the ground-truth objects set, and \textit{Recall} is the percentage of ground-truth objects in the predicted object set. \textit{Accuracy} for a test example is defined as 1 if the ground-truth object set and predicted object set overlap exactly and 0 otherwise. The \textit{GLIP accuracy} for a test example is defined as 1 if the GLIP detected object count matches the ground-truth object count and 0 otherwise.

\paragraph{Spatial Reasoning} We evaluate spatial reasoning on accuracy and GLIP accuracy. \textit{Accuracy} and \textit{GLIP accuracy} for a test example are defined as 1 if the predicted object locations in the layout and GLIP detected bounding box follow the spatial constraints, and 0 otherwise.

The results on the NSR-1K benchmark are reported in \cref{tab:nsr_evaluation}. \algoname{} achieves superior performance across multiple metrics, including 92.58\% accuracy in spatial reasoning, 58.94\% GLIP detection accuracy, which is higher than ground truth layouts, and a high recall and accuracy of 99.23\% and 95.14\% respectively on the numerical reasoning benchmark. Note that a high recall indicates a very high overlap between the predicted and ground-truth object sets, indicating that smaller language models can be effectively used for object label generation.

\begin{figure}[!ht]
	\centering
	\includegraphics[width=\linewidth,trim=0.5em 0 0.5em 0,clip]{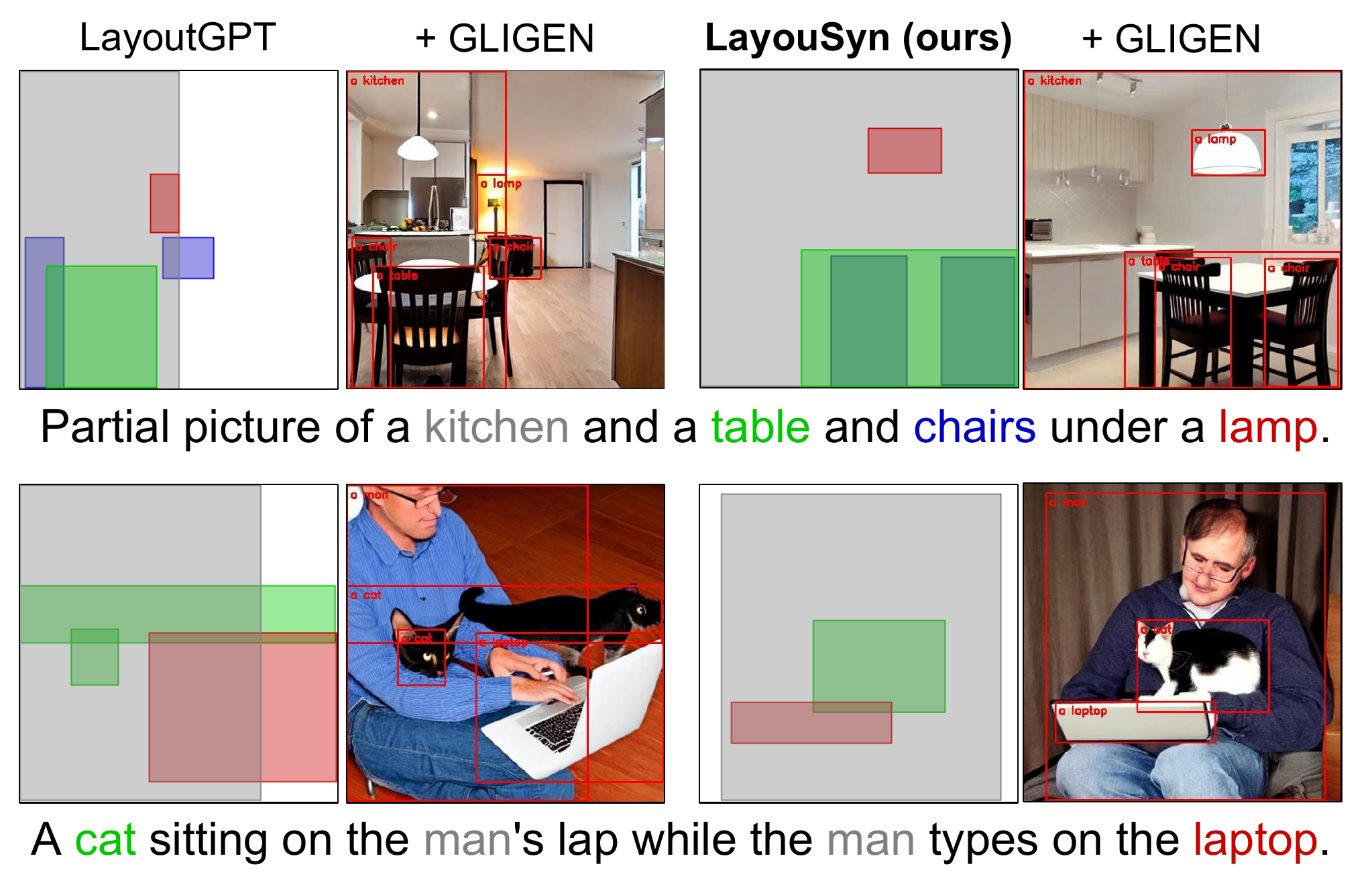}
	\caption{{\bf Comparative analysis with LayoutGPT}. \algoname{} can generate complex layouts with multiple objects following spatial constraints in the prompt.}
    \label{fig:qual_compare_layoutgpt}
\end{figure}

\subsection{Ablation Study} \label{sec:experiment_ablation_study}
\begin{figure}[!ht]
        \centering
        \includegraphics[width=\linewidth]{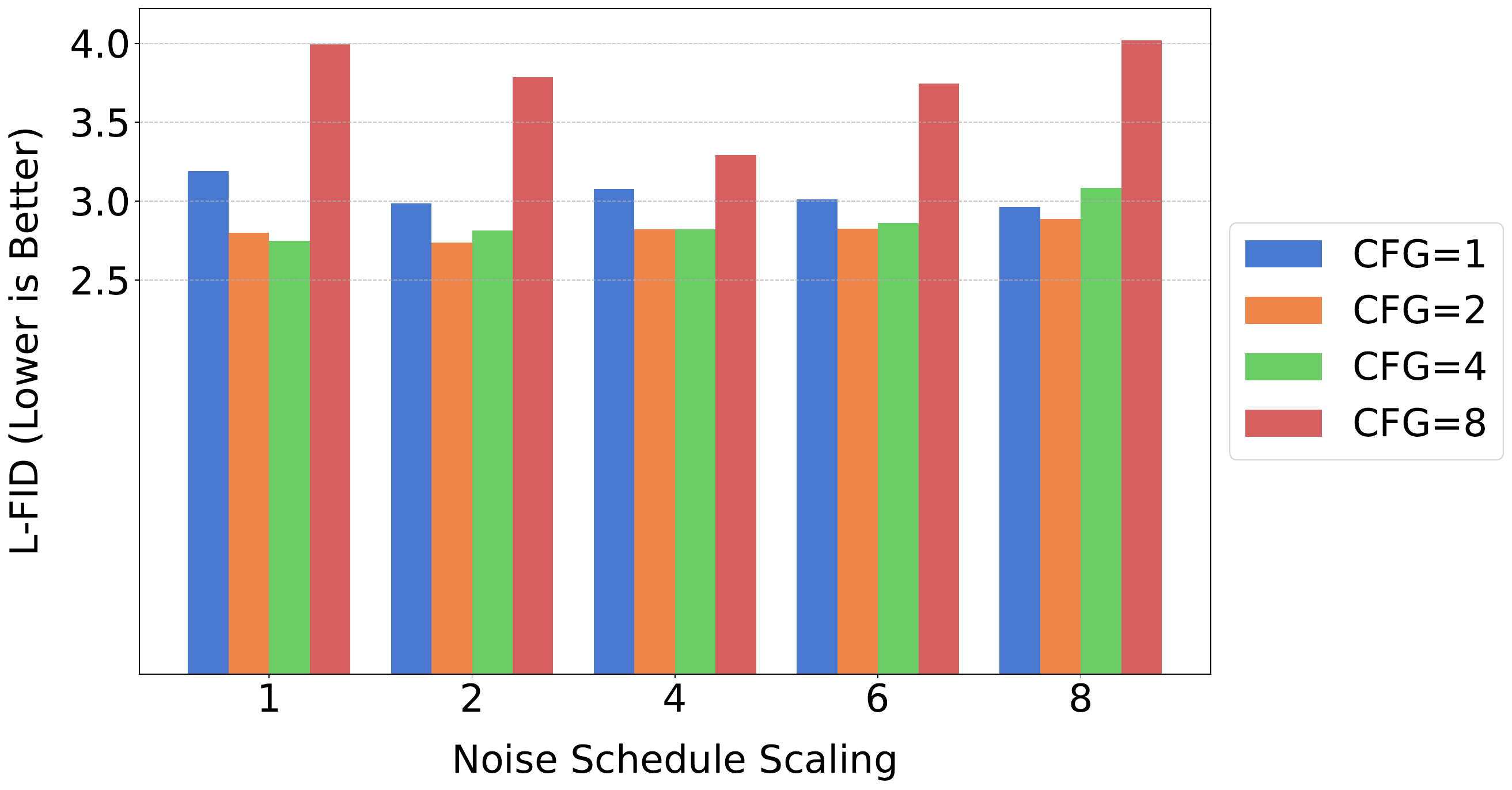}
        \caption{\textbf{Effect of noise schedule scaling on layout fidelity (L-FID) at different CFG scales}. A noise schedule scale of 2 and a CFG scale of 2 achieves the lowest L-FID.}
        \label{fig:effect_scale}
\end{figure}

We use the LDiT model trained on the COCO-GR and NSR-spatial datasets without pretraining and fix the number of sampling steps to match the diffusion steps (i.e., 100) to eliminate any potential noise from the sampling process, unless otherwise stated.

\paragraph{Noise schedule scaling} \label{sec:experiment_ablation_scale}

We report the quantitative results for different noise schedule scales $\bar{\alpha}'_t$ (\cref{eq:alpha}) and classifier-free guidance (CFG)~\cite{ho2022classifier} scales in \cref{fig:effect_scale}. We observe that L-FID initially decreases with increasing input scale but then begins to rise. We attribute the performance decline at higher scales to the noise schedule dropping too quickly during the later diffusion steps (visualized in the supplementary material). Overall, the model trained with an input scale of 2.0 and a CFG scale of 2.0 achieves the lowest L-FID value.

\begin{table}[!ht]
	\centering
	\begin{tabular}{lccc} \toprule
		Desc Set ($\rightarrow$) & GPT-3.5                               & GPT-4o-mini                           \\ \midrule
		LayoutGPT                & 3.51                                  & 6.72                                  \\
		Layousyn                 & 2.86 (\textcolor{darkgreen}{+18.5\%}) & 3.32 (\textcolor{darkgreen}{+50.1\%}) \\ \bottomrule
	\end{tabular}
	\caption{\textbf{Layout generation with a fixed description set }. We show layout FID (L-FID) scores for layout quality evaluation.}
	\label{tab:fixed_label_set}
\end{table}

\paragraph{Layout Generation with a Fixed Description Set} To isolate the layout generation capability of \algoname{} from the effects of LLM-based description set generation (\cref{sec:label_set_generation}), we evaluate \algoname{} using the same description set generated by LayoutGPT in \cref{sec:experiment_open_vocabulary_evaluation_coco}. As shown in \cref{tab:fixed_label_set}, \algoname{} achieves up to a 50.1\% improvement in Layout-FID (L-FID) compared to LayoutGPT, demonstrating superior layout generation performance.

\begin{table}[!ht]
    \centering
    \begin{tabular}{lccc}
        \toprule
        & GPT-3.5 & GPT-4o-mini & Llama-3.1-8B \\ \midrule
        L-FID $\downarrow$ & 3.49 & 3.22 & \textbf{2.74} \\ \bottomrule
    \end{tabular}
    \caption{\textbf{L-FID on description sets generated by different LLMs}. Llama-3.1-8B can effectively extract description set.} \label{tab:effect_llm}
\end{table}

\paragraph{Description Sets from Different LLMs} \label{sec:experiment_ablation_object_label_generation}

We evaluate \algoname{} using description sets extracted from LLama3.1-8B-Instruct, GPT-3.5-chat, and GPT-4o-mini. The results are shown in \cref{tab:effect_llm}. The findings confirm that lightweight language models effectively generate object descriptions, as extracting object labels from prompts is a simpler task than full layout generation.

\begin{table}[!ht]
	\centering
	\begin{tabular}{lcccccc}
		\toprule
		DDIM-Sampling \#   & 10   & 15   & 20   & 50   & 100  \\
		\midrule
		L-FID $\downarrow$ & 3.47 & 3.07 & 2.94 & 2.81 & 2.74 \\
		\bottomrule
	\end{tabular}
    \caption{\textbf{Effects of \# DDIM sampling steps on L-FID}. A step size of 15 can generate layouts with high fidelity.}\label{tab:sampling_steps}
\end{table}

\paragraph{Number of sampling steps} In \cref{tab:sampling_steps}, we demonstrate the trade-offs between generation quality and inference speed. On a single RTX A6000 card, we measured $\sim$2 sampling steps per second with $\sim$1400 samples, translating to $\sim$ 5 ms to generate a sample with 15 steps. The time cost for inference scales linearly with the number of sampling steps required. \algoname{} can achieve comparable results to LayoutGPT with 10 steps and outperforms LayoutGPT with only 25 steps ($\sim$0.2 seconds).

\begin{table}[!ht]
	\centering
	
	\begin{tabular}{ccc}
		\toprule
		Cross-Attention & Modulation & L-FID $\downarrow$ \\
		\midrule
		\ding{55}       & \ding{55}  & 2.82               \\
		\checkmark      & \ding{55}  & 2.81               \\
		\checkmark      & \checkmark & \textbf{2.74}      \\
		\bottomrule
	\end{tabular}
    \caption{Ablation study on architectural components of LDiT block, measuring their impact on L-FID score. Alignment between description tokens and the global prompt with cross-attention and modulation leads to a lower L-FID score.}
	\label{tab:ablation_architecture}
\end{table}

\paragraph{LDiT Architecture} We study the performance improvements of LDiT from evolving description tokens by comparing with 1) Vanilla DiT architecture without cross attention and modulation of description tokens, 2) Vanilla DiT architecture with added cross attention between description tokens and global prompt but without modulation of description tokens. The results are shown in \cref{tab:ablation_architecture}. We observe that evolving description tokens leads to lower FID and improved results. We quantitatively visualize the results from different models in \cref{tab:ablation_architecture}.

\section{Applications}

\subsection{LLM Initialization}

\algoname{} can be integrated with an LLM, using its planned layouts as initialization and refining them to achieve better performance with equal or fewer sampling steps. Specifically, we take the outputs from LayoutGPT, which can be used with different LLMs. For initialization, we design two strategies: 1) \textit{Description set only}: use only the description sets $\mathcal{D}$ predicted by the LLM and perform denoising starting from Gaussian noise. Full 100 denoising steps are executed; 2) \textit{Description Set + Inversion}: in addition to using the description sets, apply DDIM inversion~\cite{couairon2022diffedit} on the bounding boxes predicted by the LLM. We only denoise for the same number of steps as inversion.

We present spatial reasoning evaluations in \cref{tab:llm_init}. When using only description sets with inversion, \algoname{} brings improvement in accuracy for Llama-3.1-8B ($+$15.06), GPT-3.5 ($+$5.3), and GPT-4o-mini ($+$0.07). Comparing the results from Gaussian noise initialization (Description Set) with those from DDIM inversion (Description Set + Inv), the latter consistently yields higher accuracy, requiring just 15 steps, regardless of the LLM used. This highlights the effectiveness of LLM initialization compared to pure Gaussian noise, even when the LLM predictions are coarse.

\begin{table}[!ht]
	\centering
	\label{tab:llm_init}
    
	\resizebox{\linewidth}{!}{
		\begin{tabular}{l c c c}
			\toprule
			Method                     & Llama-3.1-8B   & GPT-3.5        & GPT-4o-mini    \\
			\midrule
			Original                   & 75.40          & 87.07          & 92.01          \\
			Description Set            & 89.75          & 90.04          & 90.95          \\
			Description Set + Inv (15) & \textbf{90.46} & \textbf{92.37} & \textbf{92.08} \\
			\bottomrule
		\end{tabular}
	}
	\caption{Spatial reasoning results with LLM initialization. Description Set: using the description set derived from LLM; Inv: initialize bounding boxes with DDIM inversion of LLM predictions (numbers in brackets are steps of inversion performed).}
\end{table}

\subsection{Object Addition}
\label{sec:addition}

\begin{figure}[!ht]
	\centering
	\includegraphics[width=\linewidth]{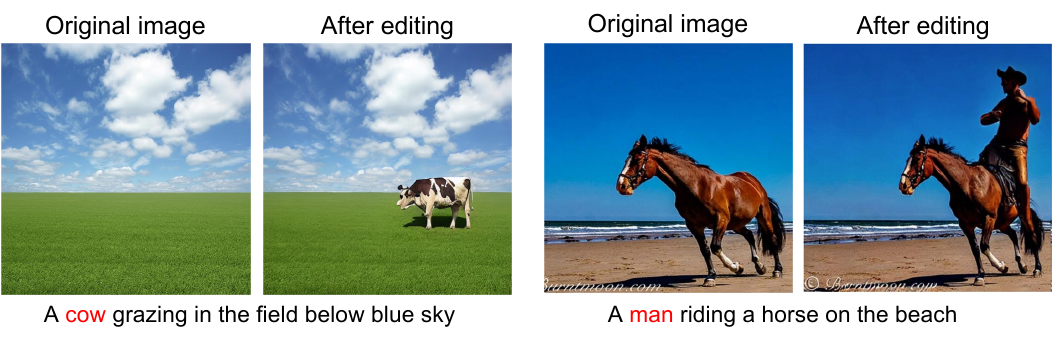}
	\caption{\textbf{Examples of automated object addition using \algoname{}} Objects to add are highlighted in red in the prompt. Our pipeline consists of four steps: extracting relevant objects from the prompt with LLM, detecting objects present in the scene with GroundingDINO \cite{liu2023grounding}, layout completion of the object to add with \algoname{} (ours), and finally inpainting the object in the image with GLIGEN \cite{li2023gligenopensetgroundedtexttoimage}.}
	\label{fig:sample_obj_addition}
\end{figure}

Image inpainting \cite{lugmayr2022repaintinpaintingusingdenoising} with the diffusion model is widely used for adding objects to images. However, these models need users to specify the spatial location of the objects to be added, requiring a human-in-the-loop to guide the inpainting process. We demonstrate the capability of automatic object addition without human intervention with \algoname{}. We outline components of our pipeline below and visualize examples in \cref{fig:sample_obj_addition}.

\begin{enumerate}
	\item \textbf{Description set:} We use an LLM to generate an object set $\mathcal{D}$ from the prompt $p$. $\mathcal{D}$ contains a list of objects that need to be considered during the object addition process.
	\item \textbf{Object Detection:} We use a pre-trained object detection model to detect objects from the description set $\mathcal{D}$ in the image $I$. We obtain a set of bounding boxes $\mathcal{B}$ for the detected objects and create a layout $L$ with the obtained bounding boxes $\mathcal{B}$ and description set $\mathcal{D}$.
	\item \textbf{Layout Completion:} We inpaint \cite{lugmayr2022repaintinpaintingusingdenoising} the bounding box locations of the objects to add with \algoname{}, and obtain an inpainting mask $M$ based on the predicted bounding boxes for objects in $\mathcal{A}$.
	\item \textbf{Object Inpainting:} We use inpainting \cite{lugmayr2022repaintinpaintingusingdenoising} with GLIGEN \cite{li2023gligenopensetgroundedtexttoimage} to inpaint objects in the set $\mathcal{A}$ into the image $I$ using the inpainting mask $M$.
\end{enumerate}

\section{Conclusion} \label{sec:conclusion}

We present \textbf{\algoname{}}, a novel open-vocabulary scene-aware text-to-layout generation framework. \textbf{\algoname{}} adopts lightweight open-source language models to predict objects from text prompts and a novel open-vocabulary diffusion-Transformer-based architecture trained in a scene-aware manner to generate layouts at any aspect ratio. Our method achieves state-of-the-art results across multiple benchmarks, including complex spatial and numerical reasoning tasks. Further, we demonstrate an interesting finding: we can seamlessly combine initialization from LLMs to reduce diffusion sampling steps and refine the LLM predictions. Finally, we present a pipeline for adding objects to the image, demonstrating the potential of \algoname{} in image editing applications. In the future, we plan to incorporate additional geometrical constraints, such as depth maps, to better handle occlusion scenarios.

\paragraph{Acknowledgment} We would like to thank Haiyang Xu, Ethan Armand, and Zeyuan Chen for their valuable feedback and insightful discussions. This work is supported by NSF Award IIS-2127544 and NSF Award IIS-2433768.

{
	\small
	\bibliographystyle{ieeenat_fullname}
	\bibliography{main}

\begin{thebibliography}{51}
\providecommand{\natexlab}[1]{#1}
\providecommand{\url}[1]{\texttt{#1}}
\expandafter\ifx\csname urlstyle\endcsname\relax
  \providecommand{\doi}[1]{doi: #1}\else
  \providecommand{\doi}{doi: \begingroup \urlstyle{rm}\Url}\fi

\bibitem[Arroyo et~al.(2021)Arroyo, Postels, and Tombari]{arroyo2021variational}
Diego~Martin Arroyo, Janis Postels, and Federico Tombari.
\newblock Variational transformer networks for layout generation.
\newblock In \emph{Proceedings of the IEEE/CVF Conference on Computer Vision and Pattern Recognition}, pages 13642--13652, 2021.

\bibitem[Binyamin et~al.(2024)Binyamin, Tewel, Segev, Hirsch, Rassin, and Chechik]{binyamin2024make}
Lital Binyamin, Yoad Tewel, Hilit Segev, Eran Hirsch, Royi Rassin, and Gal Chechik.
\newblock Make it count: Text-to-image generation with an accurate number of objects.
\newblock \emph{arXiv preprint arXiv:2406.10210}, 2024.

\bibitem[Brooks et~al.(2024)Brooks, Peebles, Holmes, DePue, Guo, Jing, Schnurr, Taylor, Luhman, Luhman, Ng, Wang, and Ramesh]{videoworldsimulators2024}
Tim Brooks, Bill Peebles, Connor Holmes, Will DePue, Yufei Guo, Li Jing, David Schnurr, Joe Taylor, Troy Luhman, Eric Luhman, Clarence Ng, Ricky Wang, and Aditya Ramesh.
\newblock Video generation models as world simulators.
\newblock \emph{2024-03-03]. https://openai. com/research/video-generation-modelsas-world-simulators}, 2024.

\bibitem[Chai et~al.(2023)Chai, Zhuang, and Yan]{chai2023layoutdm}
Shang Chai, Liansheng Zhuang, and Fengying Yan.
\newblock Layoutdm: Transformer-based diffusion model for layout generation.
\newblock In \emph{Proceedings of the IEEE/CVF Conference on Computer Vision and Pattern Recognition}, pages 18349--18358, 2023.

\bibitem[Chen et~al.(2023{\natexlab{a}})Chen, Yu, Ge, Yao, Xie, Wu, Wang, Kwok, Luo, Lu, et~al.]{chen2023pixart}
Junsong Chen, Jincheng Yu, Chongjian Ge, Lewei Yao, Enze Xie, Yue Wu, Zhongdao Wang, James Kwok, Ping Luo, Huchuan Lu, et~al.
\newblock Pixart-$\alpha$: Fast training of diffusion transformer for photorealistic text-to-image synthesis.
\newblock \emph{arXiv preprint arXiv:2310.00426}, 2023{\natexlab{a}}.

\bibitem[Chen et~al.(2023{\natexlab{b}})Chen, Sun, Song, and Luo]{chen2023diffusiondetdiffusionmodelobject}
Shoufa Chen, Peize Sun, Yibing Song, and Ping Luo.
\newblock Diffusiondet: Diffusion model for object detection, 2023{\natexlab{b}}.

\bibitem[Chen(2023)]{chen2023importancenoiseschedulingdiffusion}
Ting Chen.
\newblock On the importance of noise scheduling for diffusion models, 2023.

\bibitem[Chen et~al.(2023{\natexlab{c}})Chen, Li, Saxena, Hinton, and Fleet]{chen2023generalistframeworkpanopticsegmentation}
Ting Chen, Lala Li, Saurabh Saxena, Geoffrey Hinton, and David~J. Fleet.
\newblock A generalist framework for panoptic segmentation of images and videos, 2023{\natexlab{c}}.

\bibitem[Couairon et~al.(2022)Couairon, Verbeek, Schwenk, and Cord]{couairon2022diffedit}
Guillaume Couairon, Jakob Verbeek, Holger Schwenk, and Matthieu Cord.
\newblock Diffedit: Diffusion-based semantic image editing with mask guidance.
\newblock \emph{arXiv preprint arXiv:2210.11427}, 2022.

\bibitem[Esser et~al.(2024)Esser, Kulal, Blattmann, Entezari, M{\"u}ller, Saini, Levi, Lorenz, Sauer, Boesel, et~al.]{esser2024scaling}
Patrick Esser, Sumith Kulal, Andreas Blattmann, Rahim Entezari, Jonas M{\"u}ller, Harry Saini, Yam Levi, Dominik Lorenz, Axel Sauer, Frederic Boesel, et~al.
\newblock Scaling rectified flow transformers for high-resolution image synthesis.
\newblock In \emph{Forty-first International Conference on Machine Learning}, 2024.

\bibitem[et~al.(2024)]{dubey2024llama3herdmodels}
Abhimanyu~Dubey et al.
\newblock The llama 3 herd of models, 2024.

\bibitem[Feng et~al.(2023{\natexlab{a}})Feng, Zhu, Fu, Jampani, Akula, He, Basu, Wang, and Wang]{feng2023layoutgpt}
Weixi Feng, Wanrong Zhu, Tsu-jui Fu, Varun Jampani, Arjun Akula, Xuehai He, Sugato Basu, Xin~Eric Wang, and William~Yang Wang.
\newblock Layoutgpt: Compositional visual planning and generation with large language models.
\newblock \emph{arXiv preprint arXiv:2305.15393}, 2023{\natexlab{a}}.

\bibitem[Feng et~al.(2023{\natexlab{b}})Feng, Gong, Chen, Shen, Liu, and Zhou]{feng2023ranni}
Yutong Feng, Biao Gong, Di Chen, Yujun Shen, Yu Liu, and Jingren Zhou.
\newblock Ranni: Taming text-to-image diffusion for accurate instruction following.
\newblock \emph{arXiv preprint arXiv:2311.17002}, 2023{\natexlab{b}}.

\bibitem[Gani et~al.(2024)Gani, Bhat, Naseer, Khan, and Wonka]{gani2023llm}
Hanan Gani, Shariq~Farooq Bhat, Muzammal Naseer, Salman Khan, and Peter Wonka.
\newblock Llm blueprint: Enabling text-to-image generation with complex and detailed prompts.
\newblock In \emph{The Twelfth International Conference on Learning Representations}, 2024.

\bibitem[Gupta et~al.(2021)Gupta, Lazarow, Achille, Davis, Mahadevan, and Shrivastava]{gupta2021layouttransformer}
Kamal Gupta, Justin Lazarow, Alessandro Achille, Larry~S Davis, Vijay Mahadevan, and Abhinav Shrivastava.
\newblock Layouttransformer: Layout generation and completion with self-attention.
\newblock In \emph{Proceedings of the IEEE/CVF International Conference on Computer Vision}, pages 1004--1014, 2021.

\bibitem[Heusel et~al.(2018)Heusel, Ramsauer, Unterthiner, Nessler, and Hochreiter]{heusel2018ganstrainedtimescaleupdate}
Martin Heusel, Hubert Ramsauer, Thomas Unterthiner, Bernhard Nessler, and Sepp Hochreiter.
\newblock Gans trained by a two time-scale update rule converge to a local nash equilibrium, 2018.

\bibitem[Ho and Salimans(2022)]{ho2022classifier}
Jonathan Ho and Tim Salimans.
\newblock Classifier-free diffusion guidance.
\newblock \emph{arXiv preprint arXiv:2207.12598}, 2022.

\bibitem[Ho et~al.(2020)Ho, Jain, and Abbeel]{ho2020denoisingdiffusionprobabilisticmodels}
Jonathan Ho, Ajay Jain, and Pieter Abbeel.
\newblock Denoising diffusion probabilistic models, 2020.

\bibitem[Hu et~al.(2022)Hu, Shen, Wallis, Allen-Zhu, Li, Wang, Wang, Chen, et~al.]{hu2022lora}
Edward~J Hu, Yelong Shen, Phillip Wallis, Zeyuan Allen-Zhu, Yuanzhi Li, Shean Wang, Lu Wang, Weizhu Chen, et~al.
\newblock Lora: Low-rank adaptation of large language models.
\newblock \emph{ICLR}, 1\penalty0 (2):\penalty0 3, 2022.

\bibitem[Inoue et~al.(2023)Inoue, Kikuchi, Simo-Serra, Otani, and Yamaguchi]{inoue2023layoutdm}
Naoto Inoue, Kotaro Kikuchi, Edgar Simo-Serra, Mayu Otani, and Kota Yamaguchi.
\newblock Layoutdm: Discrete diffusion model for controllable layout generation.
\newblock In \emph{Proceedings of the IEEE/CVF Conference on Computer Vision and Pattern Recognition}, pages 10167--10176, 2023.

\bibitem[Jiang et~al.(2023)Jiang, Guo, Sun, Deng, Wu, Mijovic, Yang, Lou, and Zhang]{jiang2023layoutformer++}
Zhaoyuz Jiang, Jiaqi Guo, Shizhao Sun, Huayu Deng, Zhongkai Wu, Vuksan Mijovic, Zijiang~James Yang, Jian-Guang Lou, and Dongmei Zhang.
\newblock Layoutformer++: Conditional graphic layout generation via constraint serialization and decoding space restriction.
\newblock In \emph{Proceedings of the IEEE/CVF Conference on Computer Vision and Pattern Recognition}, pages 18403--18412, 2023.

\bibitem[Jyothi et~al.(2019)Jyothi, Durand, He, Sigal, and Mori]{jyothi2019layoutvae}
Akash~Abdu Jyothi, Thibaut Durand, Jiawei He, Leonid Sigal, and Greg Mori.
\newblock Layoutvae: Stochastic scene layout generation from a description set.
\newblock In \emph{Proceedings of the IEEE/CVF International Conference on Computer Vision}, pages 9895--9904, 2019.

\bibitem[Kingma and Ba(2017)]{kingma2017adammethodstochasticoptimization}
Diederik~P. Kingma and Jimmy Ba.
\newblock Adam: A method for stochastic optimization, 2017.

\bibitem[Kong et~al.(2022)Kong, Jiang, Chang, Zhang, Hao, Gong, and Essa]{kong2022blt}
Xiang Kong, Lu Jiang, Huiwen Chang, Han Zhang, Yuan Hao, Haifeng Gong, and Irfan Essa.
\newblock Blt: Bidirectional layout transformer for controllable layout generation.
\newblock In \emph{European Conference on Computer Vision}, pages 474--490. Springer, 2022.

\bibitem[Li et~al.(2019)Li, Yang, Hertzmann, Zhang, and Xu]{li2019layoutgan}
Jianan Li, Jimei Yang, Aaron Hertzmann, Jianming Zhang, and Tingfa Xu.
\newblock Layoutgan: Generating graphic layouts with wireframe discriminators.
\newblock \emph{arXiv preprint arXiv:1901.06767}, 2019.

\bibitem[Li et~al.(2023)Li, Liu, Wu, Mu, Yang, Gao, Li, and Lee]{li2023gligenopensetgroundedtexttoimage}
Yuheng Li, Haotian Liu, Qingyang Wu, Fangzhou Mu, Jianwei Yang, Jianfeng Gao, Chunyuan Li, and Yong~Jae Lee.
\newblock Gligen: Open-set grounded text-to-image generation, 2023.

\bibitem[Lian et~al.(2023)Lian, Li, Yala, and Darrell]{lian2023llm}
Long Lian, Boyi Li, Adam Yala, and Trevor Darrell.
\newblock Llm-grounded diffusion: Enhancing prompt understanding of text-to-image diffusion models with large language models.
\newblock \emph{arXiv preprint arXiv:2305.13655}, 2023.

\bibitem[Lin et~al.(2015)Lin, Maire, Belongie, Bourdev, Girshick, Hays, Perona, Ramanan, Zitnick, and Dollár]{lin2015microsoftcococommonobjects}
Tsung-Yi Lin, Michael Maire, Serge Belongie, Lubomir Bourdev, Ross Girshick, James Hays, Pietro Perona, Deva Ramanan, C.~Lawrence Zitnick, and Piotr Dollár.
\newblock Microsoft coco: Common objects in context, 2015.

\bibitem[Liu et~al.(2023)Liu, Zeng, Ren, Li, Zhang, Yang, Li, Yang, Su, Zhu, et~al.]{liu2023grounding}
Shilong Liu, Zhaoyang Zeng, Tianhe Ren, Feng Li, Hao Zhang, Jie Yang, Chunyuan Li, Jianwei Yang, Hang Su, Jun Zhu, et~al.
\newblock Grounding dino: Marrying dino with grounded pre-training for open-set object detection.
\newblock \emph{arXiv preprint arXiv:2303.05499}, 2023.

\bibitem[Lugmayr et~al.(2022)Lugmayr, Danelljan, Romero, Yu, Timofte, and Gool]{lugmayr2022repaintinpaintingusingdenoising}
Andreas Lugmayr, Martin Danelljan, Andres Romero, Fisher Yu, Radu Timofte, and Luc~Van Gool.
\newblock Repaint: Inpainting using denoising diffusion probabilistic models, 2022.

\bibitem[Mo et~al.(2023)Mo, Xie, Chu, Hong, Niessner, and Li]{mo2023dit}
Shentong Mo, Enze Xie, Ruihang Chu, Lanqing Hong, Matthias Niessner, and Zhenguo Li.
\newblock Dit-3d: Exploring plain diffusion transformers for 3d shape generation.
\newblock \emph{Advances in neural information processing systems}, 36:\penalty0 67960--67971, 2023.

\bibitem[Ni et~al.(2021)Ni, Ábrego, Constant, Ma, Hall, Cer, and Yang]{ni2021sentencet5scalablesentenceencoders}
Jianmo Ni, Gustavo~Hernández Ábrego, Noah Constant, Ji Ma, Keith~B. Hall, Daniel Cer, and Yinfei Yang.
\newblock Sentence-t5: Scalable sentence encoders from pre-trained text-to-text models, 2021.

\bibitem[Nichol et~al.(2021)Nichol, Dhariwal, Ramesh, Shyam, Mishkin, McGrew, Sutskever, and Chen]{nichol2021glide}
Alex Nichol, Prafulla Dhariwal, Aditya Ramesh, Pranav Shyam, Pamela Mishkin, Bob McGrew, Ilya Sutskever, and Mark Chen.
\newblock Glide: Towards photorealistic image generation and editing with text-guided diffusion models.
\newblock \emph{arXiv preprint arXiv:2112.10741}, 2021.

\bibitem[Nichol and Dhariwal(2021)]{nichol2021improved}
Alexander~Quinn Nichol and Prafulla Dhariwal.
\newblock Improved denoising diffusion probabilistic models.
\newblock In \emph{International conference on machine learning}, pages 8162--8171. PMLR, 2021.

\bibitem[Ouyang et~al.(2022)Ouyang, Wu, Jiang, Almeida, Wainwright, Mishkin, Zhang, Agarwal, Slama, Ray, et~al.]{ouyang2022training}
Long Ouyang, Jeffrey Wu, Xu Jiang, Diogo Almeida, Carroll Wainwright, Pamela Mishkin, Chong Zhang, Sandhini Agarwal, Katarina Slama, Alex Ray, et~al.
\newblock Training language models to follow instructions with human feedback.
\newblock \emph{Advances in neural information processing systems}, 35:\penalty0 27730--27744, 2022.

\bibitem[Peebles and Xie(2023)]{peebles2023scalable}
William Peebles and Saining Xie.
\newblock Scalable diffusion models with transformers.
\newblock In \emph{Proceedings of the IEEE/CVF International Conference on Computer Vision}, pages 4195--4205, 2023.

\bibitem[Peng et~al.(2023)Peng, Wang, Dong, Hao, Huang, Ma, and Wei]{peng2023kosmos2groundingmultimodallarge}
Zhiliang Peng, Wenhui Wang, Li Dong, Yaru Hao, Shaohan Huang, Shuming Ma, and Furu Wei.
\newblock Kosmos-2: Grounding multimodal large language models to the world, 2023.

\bibitem[Perez et~al.(2017)Perez, Strub, de~Vries, Dumoulin, and Courville]{perez2017filmvisualreasoninggeneral}
Ethan Perez, Florian Strub, Harm de Vries, Vincent Dumoulin, and Aaron Courville.
\newblock Film: Visual reasoning with a general conditioning layer, 2017.

\bibitem[Radford et~al.(2021)Radford, Kim, Hallacy, Ramesh, Goh, Agarwal, Sastry, Askell, Mishkin, Clark, Krueger, and Sutskever]{radford2021learningtransferablevisualmodels}
Alec Radford, Jong~Wook Kim, Chris Hallacy, Aditya Ramesh, Gabriel Goh, Sandhini Agarwal, Girish Sastry, Amanda Askell, Pamela Mishkin, Jack Clark, Gretchen Krueger, and Ilya Sutskever.
\newblock Learning transferable visual models from natural language supervision, 2021.

\bibitem[Raffel et~al.(2020)Raffel, Shazeer, Roberts, Lee, Narang, Matena, Zhou, Li, and Liu]{2020t5}
Colin Raffel, Noam Shazeer, Adam Roberts, Katherine Lee, Sharan Narang, Michael Matena, Yanqi Zhou, Wei Li, and Peter~J. Liu.
\newblock Exploring the limits of transfer learning with a unified text-to-text transformer.
\newblock \emph{Journal of Machine Learning Research}, 21\penalty0 (140):\penalty0 1--67, 2020.

\bibitem[Ramesh et~al.(2021)Ramesh, Pavlov, Goh, Gray, Voss, Radford, Chen, and Sutskever]{ramesh2021zero}
Aditya Ramesh, Mikhail Pavlov, Gabriel Goh, Scott Gray, Chelsea Voss, Alec Radford, Mark Chen, and Ilya Sutskever.
\newblock Zero-shot text-to-image generation.
\newblock In \emph{International conference on machine learning}, pages 8821--8831. Pmlr, 2021.

\bibitem[Rombach et~al.(2022)Rombach, Blattmann, Lorenz, Esser, and Ommer]{rombach2022high}
Robin Rombach, Andreas Blattmann, Dominik Lorenz, Patrick Esser, and Bj{\"o}rn Ommer.
\newblock High-resolution image synthesis with latent diffusion models.
\newblock In \emph{Proceedings of the IEEE/CVF conference on computer vision and pattern recognition}, pages 10684--10695, 2022.

\bibitem[Wang et~al.(2024{\natexlab{a}})Wang, Darrell, Rambhatla, Girdhar, and Misra]{wang2024instancediffusion}
Xudong Wang, Trevor Darrell, Sai~Saketh Rambhatla, Rohit Girdhar, and Ishan Misra.
\newblock Instancediffusion: Instance-level control for image generation.
\newblock In \emph{Proceedings of the IEEE/CVF Conference on Computer Vision and Pattern Recognition}, pages 6232--6242, 2024{\natexlab{a}}.

\bibitem[Wang et~al.(2024{\natexlab{b}})Wang, Darrell, Rambhatla, Girdhar, and Misra]{wang2024instancediffusioninstancelevelcontrolimage}
Xudong Wang, Trevor Darrell, Sai~Saketh Rambhatla, Rohit Girdhar, and Ishan Misra.
\newblock Instancediffusion: Instance-level control for image generation, 2024{\natexlab{b}}.

\bibitem[Wang et~al.(2024{\natexlab{c}})Wang, Chen, Zhong, Ding, Sha, and Tu]{wang2024dolfin}
Yilin Wang, Zeyuan Chen, Liangjun Zhong, Zheng Ding, Zhizhou Sha, and Zhuowen Tu.
\newblock Dolfin: Diffusion layout transformers without autoencoder.
\newblock In \emph{European Conference on Computer Vision}, 2024{\natexlab{c}}.

\bibitem[Xie et~al.(2023)Xie, Li, Huang, Liu, Zhang, Zheng, and Shou]{xie2023boxdiff}
Jinheng Xie, Yuexiang Li, Yawen Huang, Haozhe Liu, Wentian Zhang, Yefeng Zheng, and Mike~Zheng Shou.
\newblock Boxdiff: Text-to-image synthesis with training-free box-constrained diffusion.
\newblock In \emph{Proceedings of the IEEE/CVF International Conference on Computer Vision}, pages 7452--7461, 2023.

\bibitem[Xu et~al.(2024)Xu, Lei, Chen, Zhang, Zhao, Wang, and Tu]{xu2024bayesian}
Haiyang Xu, Yu Lei, Zeyuan Chen, Xiang Zhang, Yue Zhao, Yilin Wang, and Zhuowen Tu.
\newblock Bayesian diffusion models for 3d shape reconstruction.
\newblock In \emph{Proceedings of the IEEE/CVF Conference on Computer Vision and Pattern Recognition}, pages 10628--10638, 2024.

\bibitem[Xue et~al.(2024)Xue, Song, Guo, Liu, Zong, Liu, and Luo]{xue2024raphael}
Zeyue Xue, Guanglu Song, Qiushan Guo, Boxiao Liu, Zhuofan Zong, Yu Liu, and Ping Luo.
\newblock Raphael: Text-to-image generation via large mixture of diffusion paths.
\newblock \emph{Advances in Neural Information Processing Systems}, 36, 2024.

\bibitem[Yang et~al.(2023)Yang, Wang, Gan, Li, Lin, Wu, Duan, Liu, Liu, Zeng, et~al.]{yang2023reco}
Zhengyuan Yang, Jianfeng Wang, Zhe Gan, Linjie Li, Kevin Lin, Chenfei Wu, Nan Duan, Zicheng Liu, Ce Liu, Michael Zeng, et~al.
\newblock Reco: Region-controlled text-to-image generation.
\newblock In \emph{Proceedings of the IEEE/CVF Conference on Computer Vision and Pattern Recognition}, pages 14246--14255, 2023.

\bibitem[Zhang et~al.(2023{\natexlab{a}})Zhang, Guo, Sun, Lou, and Zhang]{zhang2023layoutdiffusion}
Junyi Zhang, Jiaqi Guo, Shizhao Sun, Jian-Guang Lou, and Dongmei Zhang.
\newblock Layoutdiffusion: Improving graphic layout generation by discrete diffusion probabilistic models.
\newblock In \emph{Proceedings of the IEEE/CVF International Conference on Computer Vision}, pages 7226--7236, 2023{\natexlab{a}}.

\bibitem[Zhang et~al.(2023{\natexlab{b}})Zhang, Rao, and Agrawala]{zhang2023adding}
Lvmin Zhang, Anyi Rao, and Maneesh Agrawala.
\newblock Adding conditional control to text-to-image diffusion models.
\newblock In \emph{Proceedings of the IEEE/CVF International Conference on Computer Vision}, pages 3836--3847, 2023{\natexlab{b}}.

\end{thebibliography}
}

\appendix
\counterwithin{figure}{section}
\counterwithin{table}{section}
\setcounter{table}{0}
\setcounter{figure}{0}
\onecolumn
\section{Appendix}
\subsection{Scaling Factor}\label{sec:appendix_scaling_factor}

\begin{figure}[!ht]
	\centering
	\includegraphics[width=0.5\linewidth]{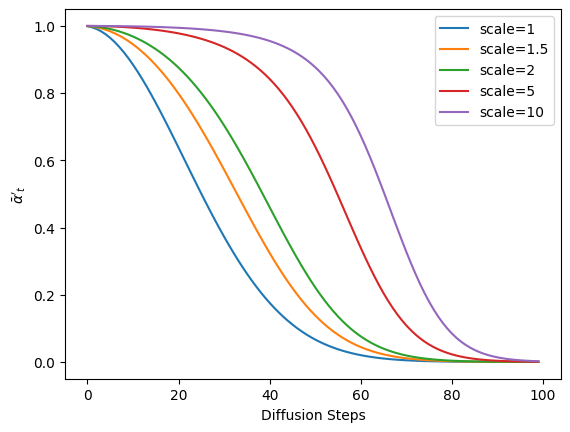}
	\caption{\textbf{Effect of scaling factor on denoising Process}. We plot the noise schedule $\bar{\alpha}'_t$ for diffusion process with 100 steps for different scaling factors $s$. We observe that $s > 1$ results in a more gradual destruction of information.}
\end{figure}

\begin{figure}[t]
    \centering
    \captionsetup[subfigure]{labelformat=empty}
    \setlength{\fboxsep}{0pt} 
    \begin{subfigure}[b]{0.32\linewidth}
        \centering
        \fbox{\includegraphics[width=\linewidth]{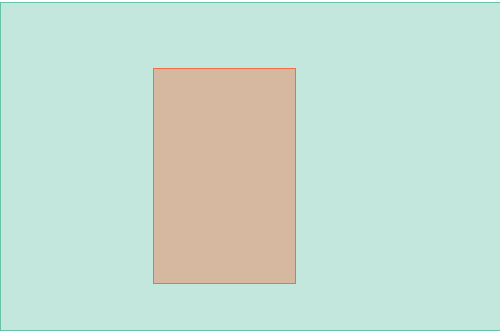}}
        \caption{Step 0}
    \end{subfigure}
    \hfill
    \begin{subfigure}[b]{0.32\linewidth}
        \centering
        \fbox{\includegraphics[width=\linewidth]{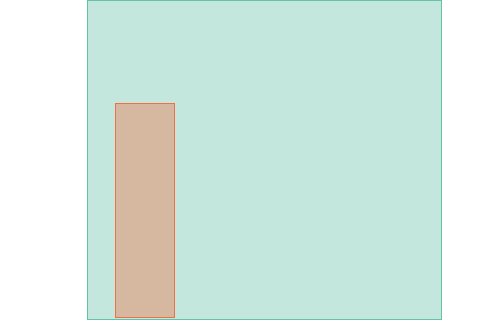}}
        \caption{Step 5}
    \end{subfigure}
    \hfill
    \begin{subfigure}[b]{0.32\linewidth}
        \centering
        \fbox{\includegraphics[width=\linewidth]{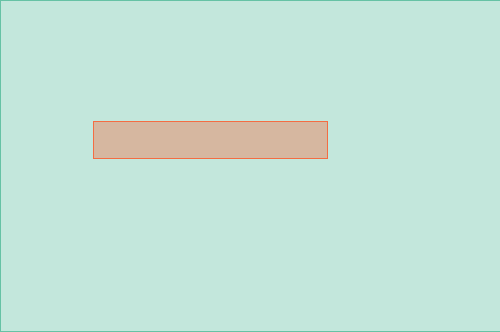}}
        \caption{Step 10}
    \end{subfigure}
    \hfill
    \begin{minipage}[b]{\linewidth}
        \centering
        \caption*{\textbf{Scale = 1.0}}
    \end{minipage}
    \vskip\baselineskip
    \begin{subfigure}[b]{0.32\linewidth}
        \centering
        \fbox{\includegraphics[width=\linewidth]{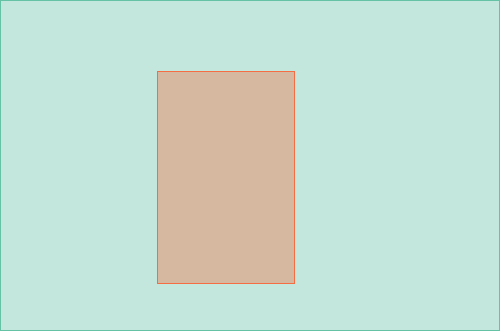}}
        \caption{Step 0}
    \end{subfigure}
    \hfill
    \begin{subfigure}[b]{0.32\linewidth}
        \centering
        \fbox{\includegraphics[width=\linewidth]{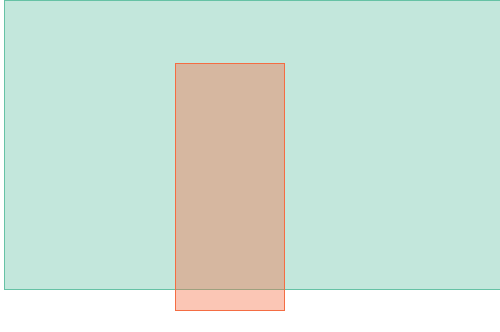}}
        \caption{Step 5}
    \end{subfigure}
    \hfill
    \begin{subfigure}[b]{0.32\linewidth}
        \centering
        \fbox{\includegraphics[width=\linewidth]{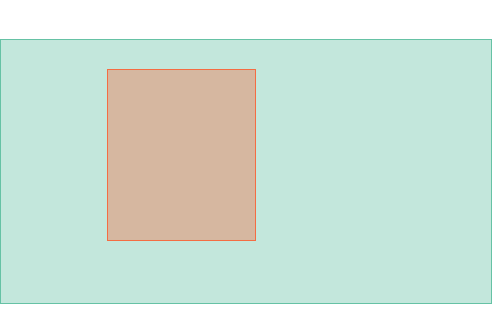}}
        \caption{Step 10}
    \end{subfigure}
    \begin{minipage}[b]{\linewidth}
        \centering
        \caption*{\textbf{Scale = 2.0}}
    \end{minipage}
    \vskip\baselineskip 
    \begin{subfigure}[b]{0.32\linewidth}
        \centering
        \fbox{\includegraphics[width=\linewidth]{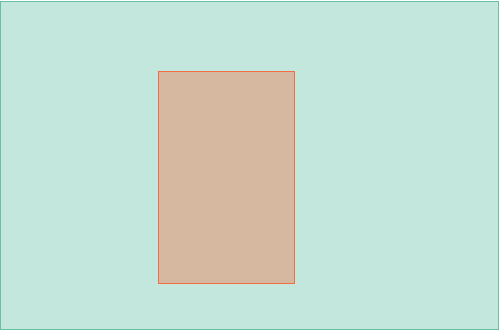}}
        \caption{Step 0}
    \end{subfigure}
    \hfill
    \begin{subfigure}[b]{0.32\linewidth}
        \centering
        \fbox{\includegraphics[width=\linewidth]{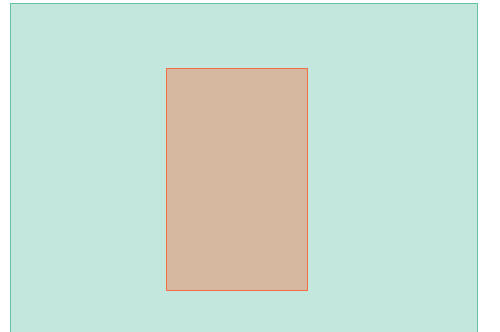}}
        \caption{Step 5}
    \end{subfigure}
    \hfill
    \begin{subfigure}[b]{0.32\linewidth}
        \centering
        \fbox{\includegraphics[width=\linewidth]{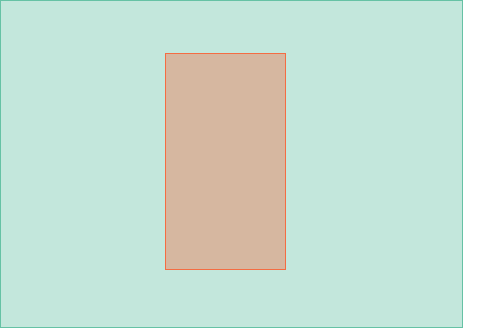}}
        \caption{Step 10}
    \end{subfigure}
    \begin{minipage}[b]{\linewidth}
        \centering
        \caption*{\textbf{Scale = 5.0}}
    \end{minipage}
    \caption{\textbf{Visualizing denoising process scale 1.0, 2.0, and 5.0}. The denoising process for higher scaling factor results in a more gradual destruction of information for the bounding box coordinates for layout with prompt \textit{\textcolor[RGB]{244, 109, 67}{Snowboarder} cuts his way down a \textcolor[RGB]{102, 194, 165}{ski slope}}.}
    \label{fig:scaling_factor}
\end{figure}

\newtheorem{theorem}{Theorem}

\begin{theorem} \label{theorem:scaling_factor}
	Given the forward process scaled by a factor $s$ and normalized input distribution
	\begin{equation}
		X_t = s \sqrt{\alpha_t} X_0 + \sqrt{1 - \alpha_t} \epsilon_t, \quad \epsilon_t \sim \mathcal{N}(0, 1), \quad \mathbb{E}[X_0] = 0, \quad \text{Var}(X_0) = 1.
	\end{equation}
	the normalized process $\tilde{X}_t$ is given by
	\begin{equation}
		\tilde{X}_t = \sqrt{\tilde{\alpha}_t} X_0 + \sqrt{1 - \tilde{\alpha}_t} \epsilon_t,
	\end{equation}
	where
	\begin{equation}
		\tilde{\alpha}_t = \frac{\sqrt{\alpha_t} s}{\sqrt{(s^2 - 1)\alpha_t + 1}}
	\end{equation}
	and has the property that $E[\tilde{X}_t] = 0$ and $Var(\tilde{X}_t) = 1$.	
\end{theorem}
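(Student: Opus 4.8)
The plan is to normalize $X_t$ by its own standard deviation and then read off the signal and noise coefficients directly. First I would compute the first two moments of $X_t$. Since $X_0$ and $\epsilon_t$ are independent with $\mathbb{E}[X_0]=0$, $\mathrm{Var}(X_0)=1$, $\mathbb{E}[\epsilon_t]=0$, $\mathrm{Var}(\epsilon_t)=1$, linearity of expectation gives $\mathbb{E}[X_t]=0$, and independence of the two terms gives $\mathrm{Var}(X_t) = s^2\alpha_t\,\mathrm{Var}(X_0) + (1-\alpha_t)\,\mathrm{Var}(\epsilon_t) = s^2\alpha_t + (1-\alpha_t) = 1 + (s^2-1)\alpha_t$. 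Denote $Z_t^2 := 1 + (s^2-1)\alpha_t$.

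Next I would set $\tilde X_t := X_t / Z_t$ with $Z_t = \sqrt{1+(s^2-1)\alpha_t}$; by construction $\mathbb{E}[\tilde X_t]=0$ and $\mathrm{Var}(\tilde X_t)=1$, which is the asserted property. It then remains to exhibit $\tilde X_t$ in the stated parametric form. Dividing the defining relation by $Z_t$ gives $\tilde X_t = \frac{s\sqrt{\alpha_t}}{Z_t}\,X_0 + \frac{\sqrt{1-\alpha_t}}{Z_t}\,\epsilon_t$, so the signal coefficient is $\sqrt{\tilde\alpha_t} = \frac{s\sqrt{\alpha_t}}{\sqrt{1+(s^2-1)\alpha_t}}$ (the displayed formula in the statement), equivalently $\tilde\alpha_t = \frac{s^2\alpha_t}{1+(s^2-1)\alpha_t}$, which recovers \cref{eq:alpha}.

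The one point that actually needs a check is that the noise coefficient is consistent with this $\tilde\alpha_t$, i.e. that $\frac{\sqrt{1-\alpha_t}}{Z_t}$ equals $\sqrt{1-\tilde\alpha_t}$. This is the Pythagorean-type identity $(\sqrt{\tilde\alpha_t})^2 + \bigl(\tfrac{\sqrt{1-\alpha_t}}{Z_t}\bigr)^2 = \frac{s^2\alpha_t + (1-\alpha_t)}{Z_t^2} = \frac{Z_t^2}{Z_t^2} = 1$, so $1-\tilde\alpha_t = \frac{1-\alpha_t}{Z_t^2}$ and the rescaled process has the clean variance-preserving form $\tilde X_t = \sqrt{\tilde\alpha_t}\,X_0 + \sqrt{1-\tilde\alpha_t}\,\epsilon_t$. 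I expect no genuine obstacle: the argument is elementary second-moment bookkeeping, and the only thing to be careful about is applying the same normalization $Z_t$ to both the $X_0$ term and the $\epsilon_t$ term and observing that $s^2\alpha_t + (1-\alpha_t) = Z_t^2$ is exactly what makes the noise coefficient collapse to $\sqrt{1-\tilde\alpha_t}$.
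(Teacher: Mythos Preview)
Your proposal is correct and follows essentially the same route as the paper: compute $\mathbb{E}[X_t]=0$ and $\mathrm{Var}(X_t)=1+(s^2-1)\alpha_t$, divide by the standard deviation, and read off the signal coefficient $\sqrt{\tilde\alpha_t}=s\sqrt{\alpha_t}/\sqrt{1+(s^2-1)\alpha_t}$. Your explicit verification that the noise coefficient equals $\sqrt{1-\tilde\alpha_t}$ via the Pythagorean identity is a step the paper glosses over with ``This can be simplified to,'' so your argument is, if anything, slightly more careful.
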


\begin{proof}
	We start with the expression for $X_t$:
	\begin{equation}
		X_t = s \sqrt{\alpha_t} X_0 + \sqrt{1-\alpha_t} \epsilon_t
	\end{equation}
	\textbf{Step 1: Expectation of $X_t$}\\
	Taking the expectation of both sides:
	\begin{equation}
		\mathbb{E}[X_t] = \mathbb{E}\left[ s \sqrt{\alpha_t} X_0 + \sqrt{1-\alpha_t} \epsilon_t \right]
	\end{equation}
	Since $\mathbb{E}[X_0] = 0$ and $\mathbb{E}[\epsilon_t] = 0$, it follows that:
	\begin{equation}
		\mathbb{E}[X_t] = s \sqrt{\alpha_t} \cdot \mathbb{E}[X_0] + \sqrt{1-\alpha_t} \cdot \mathbb{E}[\epsilon_t] = 0.
	\end{equation}
	Thus,
	\begin{equation}
		\mathbb{E}[X_t] = 0.
	\end{equation}
	\textbf{Step 2: Variance of $X_t$}\\
	Next, we compute the variance of $X_t$:
	\begin{equation}
		\text{Var}(X_t) = s^2 \alpha_t \text{Var}(X_0) + (1-\alpha_t) \text{Var}(\epsilon_t).
	\end{equation}
	Since $\text{Var}(X_0) = 1$ and $\text{Var}(\epsilon_t) = 1$, we have:
	\begin{equation}
		\text{Var}(X_t) = s^2 \alpha_t + (1-\alpha_t) = \alpha_t (s^2 - 1) + 1
	\end{equation}
	\textbf{Step 3: Normalization of $X_t $}\\
	We define the normalized process $\tilde{X}_t $ as:
	\begin{equation}
		\tilde{X}_t = \frac{X_t - \mathbb{E}[X_t]}{\sqrt{\text{Var}(X_t)}} = \frac{s \sqrt{\alpha_t} X_0 + \sqrt{1-\alpha_t} \epsilon_t}{\sqrt{\alpha_t (s^2 - 1) + 1}}
	\end{equation}
	This can be simplified to:
	\begin{equation}
		\tilde{X}_t = \sqrt{\tilde{\alpha}_t} X_0 + \sqrt{1 - \tilde{\alpha}_t} \epsilon_t
	\end{equation}
	where
	\begin{equation}
		\tilde{\alpha}_t = \frac{\sqrt{\alpha_t} s}{\sqrt{(s^2-1)\alpha_t + 1}}
	\end{equation}
	This completes the proof.
\end{proof}

\subsection{Description set generation}\label{sec:appendix_label_set_generation}

We use a pre-trained LLM to generate a set of object descriptions $\mathcal{D}$ from a given prompt.  The LLM is instructed to follow a structured process to extract noun phrases from the prompt that can be depicted in the scene and to output these objects along with their counts in JSON format. The prompt we use is as follows:

\lstset{basicstyle=\footnotesize\ttfamily,breaklines=true}

\begin{lstlisting}[breaklines]
    You are a creative scene designer who predicts a scene from a natural language prompt. A scene is a JSON object containing a list of noun phrases with their counts {"phrase1": count1, "phrase2": count2, ...}. The noun phrases contain **ONLY** common nouns. You strictly follow the below process for predicting plausible scenes: 

    Step 1: Extract noun phrases from the prompt. For example, "happy people", "car engine", "brown dog", "parking lot", etc.
    Step 2: Limit noun phrases to common nouns and convert the noun phrase to its singular form. For example, "happy people" to "person", "tall women" to "woman", "group of old people" to "person", "children" to "child", "brown dog" to "dog", "parking lot" remains "parking lot", etc.
    Step 3: Predict the count of each noun phrase and ensure consistency with the count of other objects in the scene. If a particular object does not have any explicit count mentioned in the prompt, use your creativity to assign a count to make the overall scene plausible but not too cluttered. For example, if the prompt is "a group of young kids playing with their dogs," the count of "kid" can be 3, and the count of "dog" should be the same as the count of "kid".
    Step 4: Output the final scene as a JSON object, only including physical objects and phrases without referring to actions or activities.

    Complete example:

    Prompt: Three white sheep and few women walking down a town road.
    Steps:
    Step 1: noun phrases: white sheep, women, town road
    Step 2: noun phrase in singular form: sheep, woman, town road
    Step 3: Since the count of women is not mentioned, we will assign a count of 2 to make the scene plausible. The count of "sheep" is 3 and the count of "town road" is 1.
    Step 4: {"sheep": 3, "woman": 2, "town road": 1}
    Plausible scene: {"sheep": 3, "woman": 2, "town road": 1}

    Other examples with skipped step-by-step process: 

    Prompt: A desk and office chair in the cubicle 
    Plausible scene: {"office desk": 1, "office chair": 1, "cubicle": 1} 

    Prompt: A pizza is in a box on a corner desk table.
    Plausible scene: {"pizza": 1, "box": 1, "desk table": 1}

    Note: Print **ONLY** the final scene as a JSON object.
\end{lstlisting}

\begin{figure}[!ht]
    \centering
    \includegraphics[width=\textwidth]{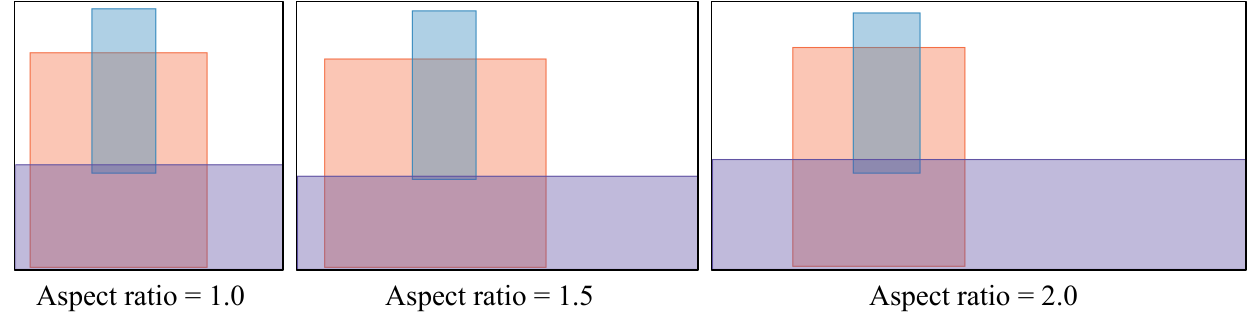}
    \vspace{-2em}
    \caption{\textbf{Layout generation with varying aspect ratios}. Layouts generated at different aspect ratios for prompt: \textit{A \textcolor[RGB]{50, 136, 189}{man} riding a \textcolor[RGB]{244, 109, 67}{horse} on the \textcolor[RGB]{94, 79, 162}{street}.} The model adjusts the position and aspect ratio corresponding to the \textcolor[RGB]{50, 136, 189}{man} and the \textcolor[RGB]{244, 109, 67}{horse} to produce natural-looking layouts.
    }
    \label{fig:aspect_ratio_examples}
\end{figure}

\subsection{Aspect Ratio}

Our model can generate layouts at different aspect ratios by conditioning on the aspect ratio of the image. We visualize the layouts generated at different aspect ratios in \cref{fig:aspect_ratio_examples} for a given caption. We observe that our model can generate layouts consistent with the image's aspect ratio.

\begin{figure}[!ht]
    \centering
    \captionsetup[subfigure]{labelformat=empty}
    \begin{subfigure}[b]{0.3\linewidth}
        \centering
        \fbox{\includegraphics[width=.9\linewidth]{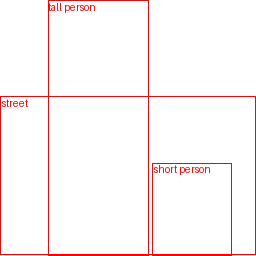}}
        \caption{\textbf{Prompt}: Two people on the street \\ \textbf{Description set}: tall person, short person, street}
    \end{subfigure}
    \hspace{3em}
    \begin{subfigure}[b]{0.3\linewidth}
        \centering
        \fbox{\includegraphics[width=.9\linewidth]{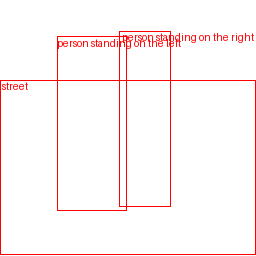}}
        \caption{\textbf{Prompt}: Two people on the street \\ \textbf{Description set}: person standing on the left, person standing on the right, street}
    \end{subfigure}
    \caption{\textbf{Visualizing generalization of our model on fine-grained description set}}
    \label{fig:label_set_generalization}
\end{figure}

\subsection{Description Set Generalizability} 

The COCO-GR dataset provides labels that are restricted to object names (e.g., “person”) and does not include detailed descriptions paired with these object names (e.g., “tall person”). However, training with the description set in an open-domain manner allows our model to generalize to fine-grained object descriptions. The results are visualized in \cref{fig:label_set_generalization}.

\end{document}